\crefname{ass}{Assumption}{Assumptions}
\newcommand{\revision}[1]{#1}
\newcommand{\R}{\mathbb{R}}
\newcommand{\N}{\mathbb{N}}
\newcommand{\lip}{\operatorname{Lip}}
\DeclareMathOperator*{\argmin}{arg\,min}
\newcommand{\abs}[1]{\left\vert#1\right\vert}
\newcommand{\norm}[1]{\left\Vert#1\right\Vert}
\newcommand{\st}{\,:\,}
\newcommand{\Lip}{\mathrm{Lip}}
\newcommand{\param}{\theta}
\newcommand{\net}{f}     
\newcommand{\inp}{x}
\newcommand{\inpp}{x'}
\newcommand{\oup}{y}
\newcommand{\oupp}{y'}
\newcommand{\Inp}{\mathcal{X}}
\newcommand{\Oup}{\mathcal{Y}}
\newcommand{\Param}{\Theta}
\newcommand{\trSet}{\mathcal{T}}
\newcommand{\trSetY}{\mathcal{T}_\Oup}
\newcommand{\trSetLip}{\mathcal{X}_\mathrm{Lip}}
\newcommand{\loss}{\ell}
\newcommand{\relu}{\operatorname{ReLU}}
\begin{document}
%
\title{CLIP: Cheap Lipschitz Training\\of Neural Networks\thanks{This work was supported by the European Union's Horizon 2020 research and innovation programme under the Marie Sk\l odowska-Curie grant agreement No. 777826 (NoMADS) and by the German Ministry of Science and Technology (BMBF) under grant agreement No. 05M2020 (DELETO).}}
\titlerunning{CLIP: Cheap Lipschitz Training of Neural Networks}
%
\author{Leon Bungert\inst{1} \and
 René Raab\inst{2} \and
 Tim Roith\inst{1} \and
 Leo Schwinn\inst{2} \and
 Daniel Tenbrinck\inst{1}}
%
\authorrunning{L. Bungert et al.}
%
\institute{Department Mathematics, Friedrich-Alexander University Erlangen-Nürnberg, Cauerstraße 11, 91058 Erlangen \\\email{\{leon.bungert,tim.roith,daniel.tenbrinck\}@fau.de} \and Machine Learning and Data Analytics Lab, Friedrich-Alexander University Erlangen-Nürnberg, Carl-Thiersch-Straße 2b, 91052 Erlangen\\
\email{\{rene.raab,leo.schwinn\}@fau.de}}
\maketitle              
\begin{abstract}
Despite the large success of deep neural networks (DNN) in recent years, most neural networks still lack mathematical guarantees in terms of stability. 
For instance, DNNs are vulnerable to small or even imperceptible input perturbations, so called adversarial examples, that can cause false predictions. 
This instability can have severe consequences in applications which influence the health and safety of humans, e.g., biomedical imaging or autonomous driving. 
While bounding the Lipschitz constant of a neural network improves stability, most methods rely on restricting the Lipschitz constants of each layer which gives a poor bound for the actual Lipschitz constant.

In this paper we investigate a variational regularization method named \emph{CLIP} for controlling the Lipschitz constant of a neural network, which can easily be integrated into the training procedure.
We mathematically analyze the proposed model, in particular discussing the impact of the chosen regularization parameter on the output of the network.
Finally, we numerically evaluate our method on both a nonlinear regression problem and the MNIST and Fashion-MNIST classification databases, and compare our results with a weight regularization approach.

\keywords{Deep neural network \and Machine learning \and Lipschitz constant \and Variational regularization \and Stability \and Adversarial attack.}
\end{abstract}

\section{Introduction}
Deep neural networks (DNNs) have led to astonishing results in various fields, such as computer vision \cite{Krizhevsky2009} and language processing \cite{Oord2016}. 
Despite its large success, deep learning and the resulting neural network architectures also bear certain drawbacks.
On the one hand, their behaviour is mathematically not yet fully understood and there exist not many rigorous analytical results.
On the other hand, most trained networks are vulnerable to adversarial attacks \cite{Goodfellow2015}. 
In image processing, adversarial examples are small, typically imperceptible perturbations to the input that cause misclassifications. 
In domains like autonomous driving or healthcare this can potentially have fatal consequences. 
To mitigate these weaknesses, many methods were proposed to make neural networks more robust and reliable.
One straight-forward approach is to regularize the norms of the weight matrices \cite{gouk2020regularisation}.
Another idea is adversarial training \cite{Madry2018,schwinn2020rapid,Shafahi2019} which uses adversarial examples generated from the training data to increase robustness locally around the training samples.
In this context, also the Lipschitz constant of neural networks has attracted a lot of attention (e.g., \cite{combettes2020lipschitz,fazlyab2019efficient,szegedy2014intriguing,zou2019lipschitz}) since it constitutes a worst-case bound for their stability, and Lipschitz-regular networks are reported to have superior generalization properties \cite{oberman2018lipschitz}.
In \cite{terjek2019adversarial} Lipschitz regularization around training samples has been related to adversarial training.

Unfortunately, the Lipschitz constant of a neural network is NP-hard to compute \cite{scaman2018lipschitz}, hence several methods in the literature aim to achieve more stability of neural networks by bounding the Lipschitz constant of each individual layer \cite{Anil2019,gouk2020regularisation,Krishnan2020,Roth2020} or the activation functions \cite{aziznejad2020deep}.
However, this strategy is a very imprecise approximation to the real Lipschitz constant and hence the trained networks may suffer from inferior expressivity and can even be less robust~\cite{liang2020large}.
The following example demonstrates a representation of the absolute value function, which clearly has Lipschitz constant 1, using a neural network whose individual layers have larger Lipschitz constants (cf. \cite{huster2018limitations} for details).
\begin{align}
    \begin{tikzpicture}[baseline=(current  bounding  box.center)]
    \begin{scope}[every node/.style={circle,draw,minimum size=18pt,inner sep=0pt,outer sep=0pt}]
        \node (x) at (0,0) {$x$};
        \node (l1) at (3,0.7) {$x_+$};
        \node (l2) at (3,-0.7) {$x_-$};
        \node (ax) at (6,0) {$|x|$};
    \end{scope}
    \node[rectangle,draw,minimum height=2.5cm,dashed] at (3,0) {$\relu$};
    \begin{scope}[every path/.style={->,>=stealth}]
        \path (x) edge node[midway,above] {$1$} (l1);
        \path (x) edge node[midway,below] {$-1$} (l2);
        \path (l1) edge node[midway,above] {$1$} (ax);
        \path (l2) edge node[midway,below] {$1$} (ax);
    \end{scope}
    \end{tikzpicture}
\end{align}
In this paper we propose a new variational regularization method for training neural networks, while simultaneously controlling their Lipschitz constant.
Since the additional computations can easily be embedded in the standard training process and are fully parallelizable, we name our method \emph{Cheap Lipschitz Training of Neural Networks (CLIP)}.
Instead of bounding the Lipschitz constant of each layer individually as in prior approaches, we aim to minimize the global Lipschitz constant of the neural network via an additional regularization term.

The \textbf{main contributions} of this paper are as follows. First, we motivate and introduce the proposed variational regularization method for Lipschitz training of neural networks, which leads to a min-max problem.
For optimizing the proposed model we formulate a stochastic gradient descent-ascent method, the CLIP algorithm.
Subsequently, we perform mathematical analysis of the proposed variational regularization method by discussing existence of solutions.
Here we use techniques, developed for the analysis of variational regularization methods of inverse problems \cite{bungert2019solution,burger2013guide}.
We analyse the two limit cases, namely the regularization parameter tending to zero and infinity, where we prove convergence to a Lipschitz minimal fit of the data and a generalized barycenter, respectively.
Finally, we evaluate the proposed approach by performing numerical experiments on a regression example and the MNIST and Fashion-MNIST classification databases~\cite{LeCun98,Xiao2017}.
We show that the introduced variational regularization term leads to improved stability compared to networks without additional regularization or with layerwise Lipschitz regularization.
For this we investigate the impact of noise and adversarial attacks on the trained neural networks.
\section{Model and Algorithms}
Given a finite training set $\trSet\subset\Inp\times\Oup$, where $\Inp$ and $\Oup$ denote the input and output space, we propose to determine parameters $\param\in\Param$ of a neural network $\net_\param:\Inp\to\Oup$ by solving the empirical risk minimization problem with an additional Lipschitz regularization term
\begin{align}\label{eq:lipschitz_training}
\param_\lambda \in \argmin_{\param\in\Param} 
\frac{1}{\abs{\trSet}} 
\sum_{(\inp,\oup)\in\trSet}\loss(\net_\param(\inp),\oup) + \lambda\lip(\net_\param).
\end{align}
Here, $\loss:\Oup\times\Oup\to\R$ is a loss function, $\lambda>0$ is a regularization parameter and the Lipschitz constant of the network $\net_\param:\Inp\to\Oup$ is defined as
\begin{align}\label{eq:lipschitz_constant}
\lip(\net_\param):=
\sup_{\inp,\inpp\in\Inp}
\frac{\norm{\net_\param(\inp)-\net_\param(\inpp)}}{\norm{\inp-\inpp}}.
\end{align}
\revision{The norms involved in this definition can be chosen freely, however, for our CLIP algorithm we assume differentiability.%
}
The fundamental difference of the proposed model \eqref{eq:lipschitz_training} compared to existing approaches in the literature~\cite{Anil2019,gouk2020regularisation,Krishnan2020,Roth2020} is that we use the actual Lipschitz constant \eqref{eq:lipschitz_constant} as regularizer and do not rely on the layer-based upper bound
\begin{align}\label{ineq:lip_weights}
\lip(\net_\param) \ \leq \ \prod_{l=1}^L \lip(\Phi_l)
\end{align}
for neural networks of the form $f_\theta = \Phi_L \circ \dots \Phi_1$.
By plugging \eqref{eq:lipschitz_constant} into \eqref{eq:lipschitz_training} this becomes a min-max problem, which we solve numerically by using a
%
modification of the stochastic batch gradient descent algorithm with momentum $\mathrm{SGDM}_{\eta,\gamma}$ with learning rate $\eta>0$ and momentum $\gamma\geq 0$, see, e.g. \cite{Ruder16}. 

Since evaluating the Lipschitz constant of a neural network is NP-hard \cite{scaman2018lipschitz}, we approximate it \revision{on a finite subset} $\trSetLip\subset\Inp\times\Inp$ by setting
\begin{align}\label{eq:diff_quot}
    \Lip(\net_\param,\trSetLip) := \max_{(\inp,\inpp)\in\trSetLip} \frac{\norm{\net_\param(\inp)-\net_\param(\inpp)}}{\norm{\inp-\inpp}}.
\end{align}
To make sure that the tuples in $\trSetLip$ correspond to points with a high Lipschitz constant \revision{during the whole training process}, we update $\trSetLip$ using gradient ascent of the difference quotient in \eqref{eq:diff_quot} with \revision{adaptive step size}, see \cref{alg:adv_ud_basic}.
We call this \emph{adversarial update}, since the tuples in $\trSetLip$ approach points which have a small distance to each other while still getting classified differently by the network.

\begin{algorithm}[t!]
\setstretch{1.2}
\DontPrintSemicolon
\SetKwInOut{Input}{input}\SetKwInOut{Output}{output}
$L(\inp,\inpp) := {\norm{\net_\param(\inp)-\net_\param(\inpp)}}\,/\,{\norm{\inp-\inpp}},\quad\inp,\inpp\in\Inp$\;
\For{$(\inp,\inpp)\in\trSetLip$}{
%
$\inp\phantom{'} \gets \inp\phantom{'} + \tau\,L(\inp,\inp') \nabla_{\inp} L(\inp,\inpp)$\;
$\inpp \gets \inpp + \tau\,L(\inp,\inp') \nabla_{\inpp} L(\inp,\inpp)$\;
%
%
}
\caption{$\mathrm{AdversarialUpdate}_\tau$ (with step size $\tau>0$)}
\label{alg:adv_ud_basic}
\end{algorithm}
\begin{algorithm}[t!]
\setstretch{1.2}
\DontPrintSemicolon
\SetKwInOut{Input}{input}\SetKwInOut{Output}{output}
\For{epoch $e = 1$ \KwTo $E$}{
\For{minibatch $B\subset \trSet$}{
$\trSetLip \gets \mathrm{AdversarialUpdate}_\tau(\trSetLip)$\;
$\param \gets \mathrm{SGDM}_{\eta,\gamma}\left(\frac{1}{|B|}
\sum_{(\inp,\oup)\in B}\loss(\net_\param(\inp),\oup) + 
\lambda~\Lip(\net_\param,\trSetLip)\right)$\;
\If{$\mathcal{A}(\net_\param;\trSet) > \alpha$}{
$\lambda \gets \lambda + d\lambda$}
\Else{$\lambda \gets \lambda - d\lambda$}
}
}
\caption{CLIP (Cheap Lipschitz Training)}
\label{alg:lip_reg_basic}
\end{algorithm}

To illustrate the effect of adversarial updates, \cref{fig:adversarial_tuple} depicts a pair of images from the Fashion-MNIST database before and after applying Algorithm \ref{alg:adv_ud_basic}, using a neural network trained with CLIP (see~\cref{sec:classification} for details).
The second pair realizes a large Lipschitz constant and hence lies close to the decision boundary of the neural network.
\begin{figure}[bht]
\centering
\def\PicWidth{0.24\textwidth}
    \includegraphics[width=\PicWidth,trim=3.6cm 1.3cm 3.1cm 1.3cm,clip]{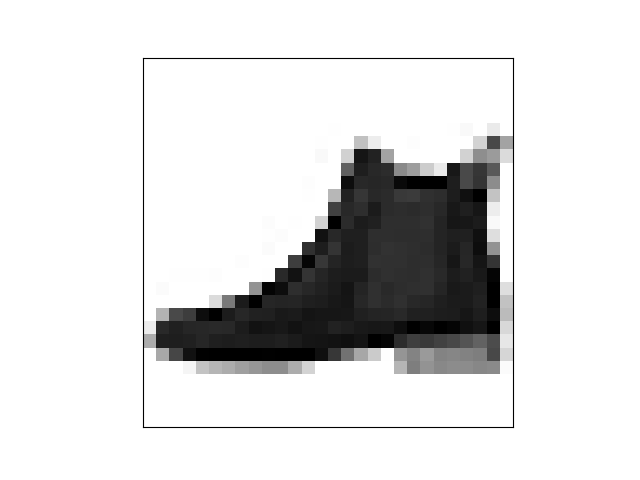}%
    \includegraphics[width=\PicWidth,trim=3.6cm 1.3cm 3.1cm 1.3cm,clip]{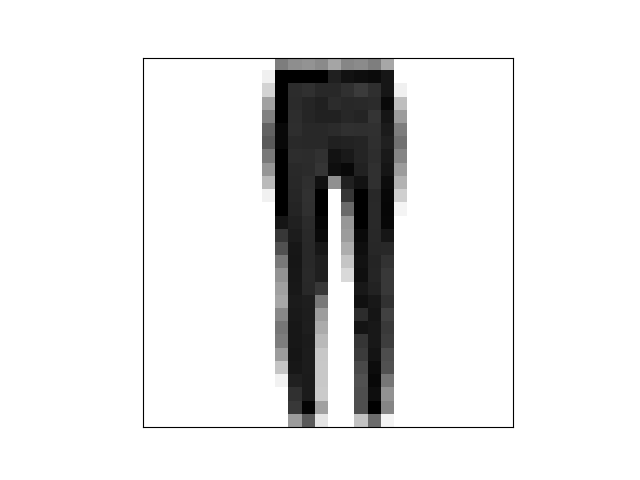}%
    \hfill%
    \includegraphics[width=\PicWidth,trim=3.6cm 1.3cm 3.1cm 1.3cm,clip]{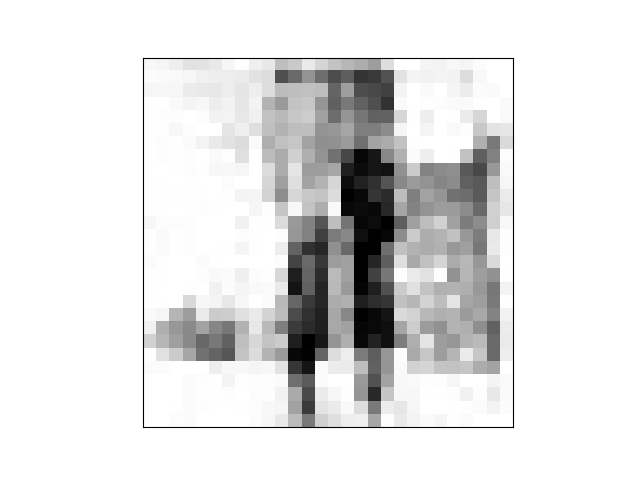}%
    \includegraphics[width=\PicWidth,trim=3.6cm 1.3cm 3.1cm 1.3cm,clip]{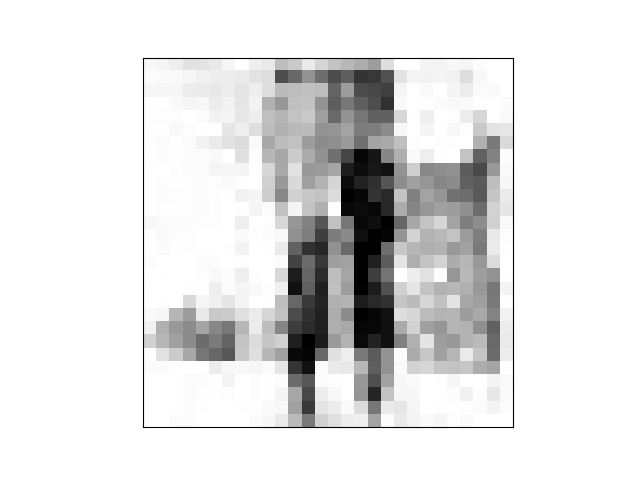}%
\caption{Effect of \cref{alg:adv_ud_basic}. The initial tuple gets classified correctly with high confidence ($>90\%$). After adversarial updates the pair gets misclassified as the respective other class with low confidence ($<30\%$).
\label{fig:adversarial_tuple}}
\end{figure}

The regularization parameter $\lambda$ in \eqref{eq:lipschitz_training} is chosen by a discrepancy principle from inverse problems \cite{anzengruber2009morozov,bungert2020variational}: We compare an accuracy measure $\mathcal{A}(\net_\param;\trSet)$ for the neural network $\net_\param$, evaluated on the training data, to a target accuracy $\alpha$. 
Combining the adversarial update from \cref{alg:adv_ud_basic} with stochastic gradient descent with momentum, we propose the CLIP algorithm, which is given in \cref{alg:lip_reg_basic}.
Similarly to \cite{Shafahi2019} one can reuse computations of the backward pass in the gradient step of the Lipschitz regularizer with respect to $\param$ in \cref{alg:lip_reg_basic} to compute the gradient with respect to the input $\inp$, needed in \cref{alg:adv_ud_basic}, which indeed makes CLIP a `cheap' extension to conventional training.
\revision{We would like to emphasize that the set $\trSetLip$, on which we approximate the Lipschitz constant of the neural network, is not fixed but is updated in an optimal way in every minibatch. Hence, it plays the role of an adaptive `training set' for the Lipschitz constant. Future analysis will investigate the approximation quality of this approach in the framework of stochastic gradient descent methods.}

\section{Analysis}
\label{sec:analysis}
In this section we prove analytical results on the model \eqref{eq:lipschitz_training}, which are inspired by analogous statements for variational regularization methods for inverse problems (see, e.g., \cite{bungert2019solution,burger2013guide}).
\revision{While the CLIP \cref{alg:adv_ud_basic} is stochastic in nature, our analysis focuses on the deterministic model \eqref{eq:lipschitz_training}, whose solutions are approximated by CLIP.}
For our results we have to pose some mild assumptions on the loss function $\loss$ and the neural network $\net_\param$ which are fulfilled for most loss functions and network architectures, \revision{such as mean squared error or cross entropy loss and architectures like feed-forward, convolutional, or residual networks with continuous activation functions.}
Furthermore, we assume that $\Inp,\Oup$, and $\Param$ are finite-dimensional spaces, which is the case in most applications and lets us state our theory more compactly than using Banach space topologies.
\begin{enumerate}[label= \textbf{Assumption \arabic*.}, ref={\arabic*}, align=left]
\setlength{\itemsep}{9pt}
\item\label[ass]{ass:loss} We assume that the loss function 
$\loss:\Oup\times\Oup\to\R\cup\{\infty\}$ satisfies:\\[-0.2cm]
\begin{enumerate}
\setlength{\itemsep}{6pt}
\item $\loss(\oup,\oupp)\geq 0$ for all $\oup,\oupp\in\Oup$,
\item $\oup\mapsto\loss(\oup,\oupp)$ is lower semi-continuous for all $\oupp\in\Oup$.
\end{enumerate}
\item\label[ass]{ass:net} We assume that the map $\param\mapsto\net_\param(\inp)$ 
is continuous for all $\inp\in\Inp$.
\item\label[ass]{ass:exist} We assume that there exists $\param\in\Param$ such that 
\begin{align*}
\frac{1}{\abs{\trSet}}
\sum_{(\inp,\oup)\in\trSet}\loss(\net_\param(\inp),\oup)+
\lambda \lip(\net_\param) < \infty.
\end{align*}
\end{enumerate}
We will also need the following lemma, which states that the Lipschitz constant of $\net_\param$ is lower semi-continuous with respect to the parameters $\param$.
\begin{lemma}\label{lem:lsc_lip}
Under \cref{ass:net} the functional $\param\mapsto\lip(\net_\param)$ is lower semi-continuous.
\end{lemma}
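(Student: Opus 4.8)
The plan is to exhibit $\param\mapsto\lip(\net_\param)$ as a pointwise supremum of continuous functions; since any such supremum is automatically lower semi-continuous, the claim follows. So the first step is, for every fixed pair $(\inp,\inpp)\in\Inp\times\Inp$ with $\inp\neq\inpp$, to consider the single difference quotient
\begin{align*}
g_{\inp,\inpp}(\param) := \frac{\norm{\net_\param(\inp)-\net_\param(\inpp)}}{\norm{\inp-\inpp}}
\end{align*}
and to observe that it depends continuously on $\param$. Indeed, by \cref{ass:net} the maps $\param\mapsto\net_\param(\inp)$ and $\param\mapsto\net_\param(\inpp)$ are continuous, and composing with subtraction, with the norm on $\Oup$, and with division by the fixed positive constant $\norm{\inp-\inpp}$ preserves continuity. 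Hence each $g_{\inp,\inpp}$ is continuous on $\Param$.

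The second step is to recall from the definition \eqref{eq:lipschitz_constant} that
\begin{align*}
\lip(\net_\param)=\sup_{\substack{\inp,\inpp\in\Inp\\\inp\neq\inpp}}g_{\inp,\inpp}(\param),
\end{align*}
so that $\param\mapsto\lip(\net_\param)$ is the pointwise supremum over the (possibly uncountable) index set $\{(\inp,\inpp)\st\inp\neq\inpp\}$ of the continuous functions $g_{\inp,\inpp}$. The final step is then the elementary fact that such a supremum is lower semi-continuous: given any sequence $\param_n\to\param$ and any admissible pair $(\inp,\inpp)$, continuity of $g_{\inp,\inpp}$ gives $g_{\inp,\inpp}(\param)=\lim_{n\to\infty}g_{\inp,\inpp}(\param_n)\leq\liminf_{n\to\infty}\lip(\net_{\param_n})$; taking the supremum over all such $(\inp,\inpp)$ on the left-hand side yields $\lip(\net_\param)\leq\liminf_{n\to\infty}\lip(\net_{\param_n})$, which is the definition of lower semi-continuity. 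Note that this inequality is meaningful and correct in $[0,\infty]$, so the possibility $\lip(\net_\param)=\infty$ causes no trouble.

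I do not expect a genuine obstacle here; the proof is entirely routine. The only two points deserving explicit mention are the continuity of $g_{\inp,\inpp}$, which is exactly where \cref{ass:net} enters, and the (standard but easy to overlook) observation that no countability of the index set $\Inp\times\Inp$ is needed to pass from pointwise continuity of the $g_{\inp,\inpp}$ to lower semi-continuity of their supremum. If one prefers, the same conclusion can be phrased via closedness of sublevel sets $\{\param\st\lip(\net_\param)\leq c\}=\bigcap_{\inp\neq\inpp}\{\param\st g_{\inp,\inpp}(\param)\leq c\}$, an intersection of closed sets and hence closed.
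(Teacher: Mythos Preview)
Your proof is correct and follows essentially the same approach as the paper: both establish continuity of each difference quotient $g_{\inp,\inpp}$ via \cref{ass:net} and then use the inequality $\sup_{\inp,\inpp}\liminf_n g_{\inp,\inpp}(\param_n)\leq\liminf_n\sup_{\inp,\inpp}g_{\inp,\inpp}(\param_n)$ to conclude. The only cosmetic difference is that you explicitly name the abstract principle (a pointwise supremum of continuous functions is lower semi-continuous), whereas the paper writes out the chain of inequalities directly without isolating $g_{\inp,\inpp}$.
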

\begin{proof}
Let $(\param_i)_{i\in\N}\subset\Param$ converge to $\param\in\Param$.
Using the continuity of $\param\mapsto\net_\param(\inp)$ and the lower semi-continuity of the norm $\norm{\cdot}$, one can compute
\begin{align*}
    \lip(\net_\param) 
    \ &= \sup_{\inp,\inpp\in\Inp} \frac{\norm{\net_\param(\inp)-\net_\param(\inpp)}}{\norm{\inp-\inpp}} 
    \ \leq \ \sup_{\inp,\inpp\in\Inp} \liminf_{i\to\infty} \frac{\norm{\net_{\param_i}(\inp)-\net_{\param_i}(\inpp)}}{\norm{\inp-\inpp}} \\
    &\leq \ \liminf_{i\to\infty} \sup_{\inp,\inpp\in\Inp} \frac{\norm{\net_{\param_i}(\inp)-\net_{\param_i}(\inpp)}}{\norm{\inp-\inpp}} 
    \ = \ \liminf_{i\to\infty} \lip(\net_{\param_i}). 
\end{align*}
\end{proof}
\subsection{Existence of Solutions}
\label{sec:existence}
We start with an existence statement for a slight modification of~\eqref{eq:lipschitz_training}, which also regularizes the network parameters and guarantees coercivity of the objective.
\begin{proposition}
Under \cref{ass:loss,ass:net,ass:exist} the problem
\begin{align}\label{eq:lipschitz_training_reg}
\min_{\param\in\Param}
\frac{1}{\abs{\trSet}}
\sum_{(\inp,\oup)\in\trSet}
\loss(\net_\param(\inp),\oup) + 
\lambda\lip(\net_\param) + \mu\norm{\param}_\Param
\end{align}
has a solution for all values $\lambda,\mu>0$.
Here, $\norm{\cdot}_\Param$ denotes a norm on $\Param$.
\end{proposition}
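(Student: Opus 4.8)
The plan is to use the direct method of the calculus of variations. Denote the objective in \eqref{eq:lipschitz_training_reg} by $J(\param)$. First I would observe that by \cref{ass:exist} there exists some $\param_0$ with a finite loss term, and since $\mu\norm{\param_0}_\Param < \infty$ as well, we have $J(\param_0) < \infty$, so $\inf_{\param\in\Param} J(\param) =: J^* \in [0,\infty)$ is finite (nonnegativity follows from \cref{ass:loss}(a) and the fact that $\lip(\net_\param)\geq 0$, $\norm{\param}_\Param\geq 0$). Next I would take a minimizing sequence $(\param_i)_{i\in\N}$ with $J(\param_i)\to J^*$. Then for $i$ large enough $J(\param_i)\leq J^* + 1$, and since all three summands are nonnegative, the term $\mu\norm{\param_i}_\Param$ is bounded, hence $\norm{\param_i}_\Param \leq (J^*+1)/\mu$ for large $i$. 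This is the coercivity that the extra regularization term buys us.

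The second step is compactness: since $\Param$ is finite-dimensional and $(\param_i)$ is bounded, Bolzano--Weierstrass gives a subsequence (not relabeled) converging to some $\param^*\in\Param$. The third step is lower semi-continuity of $J$ along this subsequence. The loss term is handled by combining \cref{ass:net} (continuity of $\param\mapsto\net_\param(\inp)$) with \cref{ass:loss}(b) (lower semi-continuity of $\oup\mapsto\loss(\oup,\oupp)$): for each fixed $(\inp,\oup)\in\trSet$ we have $\net_{\param_i}(\inp)\to\net_{\param^*}(\inp)$, so $\liminf_{i\to\infty}\loss(\net_{\param_i}(\inp),\oup)\geq\loss(\net_{\param^*}(\inp),\oup)$; summing over the finite set $\trSet$ preserves this. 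The Lipschitz term is lower semi-continuous by \cref{lem:lsc_lip}. The norm term $\param\mapsto\mu\norm{\param}_\Param$ is continuous. Adding up (and using that $\liminf$ of a sum is $\geq$ the sum of $\liminf$'s when the terms are bounded below) yields $J(\param^*)\leq\liminf_{i\to\infty}J(\param_i) = J^*$, so $\param^*$ is a minimizer.

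I do not expect a serious obstacle here; the structure is entirely standard and the three assumptions plus \cref{lem:lsc_lip} were clearly tailored to make each piece go through. The one point requiring a little care is the interplay between the extended-real-valued loss ($\loss$ may take the value $\infty$) and the arithmetic of $\liminf$: one must check that splitting $\liminf$ of the sum into a sum of $\liminf$'s is valid, which it is because every summand is bounded below by $0$, so no $\infty-\infty$ ambiguity arises. A second minor point is that the minimizing sequence bound on $\norm{\param_i}_\Param$ only holds for large $i$, which is harmless since discarding finitely many terms does not change the limit.
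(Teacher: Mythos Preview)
Your proposal is correct and follows essentially the same approach as the paper: the direct method, with coercivity coming from the $\mu\norm{\param}_\Param$ term, compactness from finite-dimensionality of $\Param$, and lower semi-continuity of each summand via \cref{ass:loss,ass:net} and \cref{lem:lsc_lip}. Your write-up is in fact more careful than the paper's, which compresses all of this into three sentences; the extra remarks about the $\liminf$ arithmetic with possibly $\infty$-valued terms and the tail-only bound are valid and harmless.
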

\begin{proof}
We let $(\param_i)_{i\in\N}\subset\Param$ be a minimizing sequence, whose existence is assured by \cref{ass:exist}.
\revision{Since $\mu>0$ in \eqref{eq:lipschitz_training_reg}, the norms $\norm{\param_i}_\Param$ for $i\in\N$ are} uniformly bounded and hence, up to a subsequence which we do not relabel, $\param_i\to\param^*\in\Param$ as $i\to\infty$. 
The lower semi-continuity of $\loss$, $\lip$, and $\norm{\cdot}_\Param$ together with the continuity of $\param\mapsto\net_\param(\inp)$ then shows that $\param^*$ solves \eqref{eq:lipschitz_training_reg}.
\end{proof}
\begin{remark}
If $\Param$ is compact or even finite, existence for \eqref{eq:lipschitz_training} is assured since any minimizing sequence in $\Param$ is compact.
The reason that in the general case we can only show existence for the regularized problem \eqref{eq:lipschitz_training_reg} is that it assures boundedness of minimizing sequences.
Even for the unregularized empirical risk minimization problem existence is typically assumed in the \emph{realizability assumption} \cite{shalev2014understanding}.
\end{remark}
\subsection{Dependency on the Regularization Parameter}
\label{sec:dependency}
We start with the statement that the Lipschitz constant in \eqref{eq:lipschitz_training} decreases and the empirical risk increases as the regularization parameter $\lambda$ grows.
\begin{proposition}\label{prop:in_decrease_lambda}
Let $\param_{\lambda}$ solve \eqref{eq:lipschitz_training} for $\lambda\geq 0$.
Then it holds
\begin{alignat}{2}
\lambda&\longmapsto
\frac{1}{\abs{\trSet}}
\sum_{(\inp,\oup)\in\trSet}\loss(\net_{\param_\lambda}(\inp),\oup) \quad&&\text{is non-decreasing,}
\\
\lambda&\longmapsto\lip(\net_{\param_{\lambda}}) 
\quad&&\text{is non-increasing.}
\end{alignat}
\end{proposition}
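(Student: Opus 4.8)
The plan is to use the standard comparison argument for variational regularization methods: fix two parameters $0 \le \lambda_1 < \lambda_2$, abbreviate the empirical risk as $R(\param) := \frac{1}{\abs{\trSet}}\sum_{(\inp,\oup)\in\trSet}\loss(\net_\param(\inp),\oup)$, and play off the optimality of $\param_{\lambda_1}$ and $\param_{\lambda_2}$ against each other. Since $\param_{\lambda_1}$ minimizes $R(\param)+\lambda_1\lip(\net_\param)$ we get $R(\param_{\lambda_1})+\lambda_1\lip(\net_{\param_{\lambda_1}}) \le R(\param_{\lambda_2})+\lambda_1\lip(\net_{\param_{\lambda_2}})$, and symmetrically $R(\param_{\lambda_2})+\lambda_2\lip(\net_{\param_{\lambda_2}}) \le R(\param_{\lambda_1})+\lambda_2\lip(\net_{\param_{\lambda_1}})$.

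Adding these two inequalities makes the $R$-terms cancel, leaving $\lambda_1\lip(\net_{\param_{\lambda_1}}) + \lambda_2\lip(\net_{\param_{\lambda_2}}) \le \lambda_1\lip(\net_{\param_{\lambda_2}}) + \lambda_2\lip(\net_{\param_{\lambda_1}})$, which rearranges to $(\lambda_2-\lambda_1)\bigl(\lip(\net_{\param_{\lambda_1}})-\lip(\net_{\param_{\lambda_2}})\bigr) \ge 0$; since $\lambda_2 - \lambda_1 > 0$, this gives $\lip(\net_{\param_{\lambda_2}}) \le \lip(\net_{\param_{\lambda_1}})$, i.e. monotonicity of the Lipschitz constant. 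Plugging this back into the first optimality inequality yields $R(\param_{\lambda_1}) \le R(\param_{\lambda_2}) + \lambda_1\bigl(\lip(\net_{\param_{\lambda_2}})-\lip(\net_{\param_{\lambda_1}})\bigr) \le R(\param_{\lambda_2})$, which is monotonicity of the empirical risk. One should also note the degenerate case $\lambda_1 = 0$, where the argument still works verbatim.

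I do not expect a genuine obstacle here — this is the textbook monotonicity argument and the hypotheses (existence of the minimizers $\param_\lambda$, which is what the statement assumes) are exactly what is needed. The only subtle point worth a sentence is that $\param_\lambda$ is merely \emph{a} minimizer, not unique, so the statement should be read as: for any selection $\lambda \mapsto \param_\lambda$ of minimizers, the two maps have the stated monotonicity; the proof above indeed works for arbitrary such selections since it only uses the defining optimality inequalities.
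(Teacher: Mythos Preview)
Your proof is correct and is precisely the standard comparison argument for variational regularization; the paper does not spell out a proof of its own but simply refers to \cite{burger2013guide}, where exactly this argument appears. Your remark about the non-uniqueness of $\param_\lambda$ is a helpful clarification that the paper leaves implicit.
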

\begin{proof}
The proof works precisely as in \cite{burger2013guide}.
\end{proof}
Now we will study the limit cases $\lambda\searrow 0$ and $\lambda\to\infty$ in \eqref{eq:lipschitz_training}.
As in Section~\ref{sec:existence}, because of lack of coercivity, we have to assume that the corresponding sequences of optimal network parameters converge.

Our first statement deals with the behavior as the regularization parameter $\lambda$ tends to zero. 
We show that in this case the learned neural networks converge to one which fits the training data with the smallest Lipschitz constant.
\begin{proposition}\label{prop:lambda_to_zero}
Let \cref{ass:loss,ass:net,ass:exist} and the realizability assumption~\cite{shalev2014understanding} 
\begin{align}\label{eq:training_fit}
\min_{\param\in\Param}
\frac{1}{\abs{\trSet}}
\sum_{(\inp,\oup)\in\trSet}\loss(\net_{\param}(\inp),\oup)=0
\end{align}
be satisfied.
Let $\param_\lambda$ be a solution of 
\eqref{eq:lipschitz_training} for $\lambda>0$. 
If $\param_\lambda \to \param^\dagger\in\Param$ as $\lambda\searrow 0$, 
then
\begin{align}\label{eq:param_dagger}
\param^\dagger\in\argmin\left\lbrace \lip(\net_\param) \st \param\in\Param,\, \frac{1}{\abs{\trSet}}
\sum_{(\inp,\oup)\in\trSet}\loss(\net_{\param}(\inp),\oup)
=0\right\rbrace
\end{align}
if this problem admits a solution with $\lip(\net_\param)<\infty$.
\end{proposition}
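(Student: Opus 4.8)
The plan is to run the standard variational-regularization argument (in the spirit of \cite{burger2013guide}): first show that the limit parameter $\param^\dagger$ is \emph{feasible} for the constrained problem in \eqref{eq:param_dagger}, i.e.\ that it fits the training data exactly, and then show that among all feasible parameters it has minimal Lipschitz constant. Throughout, fix a comparison parameter $\hat\param$ with zero empirical risk and $\lip(\net_{\hat\param})<\infty$; such a $\hat\param$ exists by the realizability assumption \eqref{eq:training_fit} together with the fact that the minimum in \eqref{eq:param_dagger} is over a nonempty set.

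\emph{Feasibility of the limit.} By optimality of $\param_\lambda$ for \eqref{eq:lipschitz_training}, testing the objective with $\hat\param$ (whose loss term vanishes) gives
\begin{align*}
\frac{1}{\abs{\trSet}}\sum_{(\inp,\oup)\in\trSet}\loss(\net_{\param_\lambda}(\inp),\oup)+\lambda\lip(\net_{\param_\lambda})
\ \leq\ \lambda\lip(\net_{\hat\param}).
\end{align*}
Since all loss terms are nonnegative (\cref{ass:loss}), this yields both $\frac{1}{\abs{\trSet}}\sum_{(\inp,\oup)\in\trSet}\loss(\net_{\param_\lambda}(\inp),\oup)\leq\lambda\lip(\net_{\hat\param})\to 0$ and $\lip(\net_{\param_\lambda})\leq\lip(\net_{\hat\param})$ as $\lambda\searrow0$. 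Using that $\param\mapsto\net_\param(\inp)$ is continuous (\cref{ass:net}) and that $\oup\mapsto\loss(\oup,\oupp)$ is lower semi-continuous (\cref{ass:loss}), passing to the limit along $\param_\lambda\to\param^\dagger$ in the finite sum gives $0\leq\frac{1}{\abs{\trSet}}\sum_{(\inp,\oup)\in\trSet}\loss(\net_{\param^\dagger}(\inp),\oup)\leq\liminf_{\lambda\searrow0}\frac{1}{\abs{\trSet}}\sum_{(\inp,\oup)\in\trSet}\loss(\net_{\param_\lambda}(\inp),\oup)=0$, so $\param^\dagger$ is feasible.

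\emph{Minimality.} Let $\param$ be any feasible parameter, i.e.\ one with zero empirical risk. Using nonnegativity of the loss and optimality of $\param_\lambda$ against $\param$,
\begin{align*}
\lambda\lip(\net_{\param_\lambda})\ \leq\ \frac{1}{\abs{\trSet}}\sum_{(\inp,\oup)\in\trSet}\loss(\net_{\param_\lambda}(\inp),\oup)+\lambda\lip(\net_{\param_\lambda})\ \leq\ \lambda\lip(\net_\param),
\end{align*}
and dividing by $\lambda>0$ gives $\lip(\net_{\param_\lambda})\leq\lip(\net_\param)$. Taking the $\liminf$ as $\lambda\searrow0$ and invoking the lower semi-continuity of $\param\mapsto\lip(\net_\param)$ from \cref{lem:lsc_lip} yields $\lip(\net_{\param^\dagger})\leq\liminf_{\lambda\searrow0}\lip(\net_{\param_\lambda})\leq\lip(\net_\param)$. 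Since $\param$ was an arbitrary feasible parameter and $\param^\dagger$ is itself feasible, this is precisely \eqref{eq:param_dagger}.

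The argument is essentially soft, and no compactness of $\Param$ is needed because convergence of $\param_\lambda$ is assumed. The only points requiring attention are: ensuring the comparison parameter $\hat\param$ has finite Lipschitz constant, so that the first inequality is not vacuous and forces the empirical risk of $\param_\lambda$ to zero; and the interchange of limit with the loss, which is exactly where lower semi-continuity of $\loss$ and continuity of $\param\mapsto\net_\param(\inp)$ come in.
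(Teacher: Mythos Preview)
Your argument is correct and follows the same variational-regularization route as the paper: compare $\param_\lambda$ against feasible parameters via optimality, then pass to the limit using lower semi-continuity of the loss (for feasibility of $\param^\dagger$) and of the Lipschitz constant via \cref{lem:lsc_lip} (for minimality). Your explicit split into a feasibility step and a minimality step is in fact cleaner than the paper's single chained inequality, and your remark that the comparison parameter $\hat\param$ must have finite Lipschitz constant flags a point the paper leaves implicit.
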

\begin{proof}
We fix $\param\in\Param$ which satisfies \eqref{eq:training_fit} and $\lip(\net_\param)<\infty$.
Using the optimality of $\param_\lambda$ we infer
\begin{align}\label{ineq:bounded_energy}
\liminf_{\lambda\searrow 0}
\frac{1}{\lambda\abs{\trSet}}
\sum_{(\inp,\oup)\in\trSet}\loss(\net_{\param_\lambda}(\inp),\oup) + 
\lip(\net_{\param_\lambda}) 
\ \leq \ \lip(\net_\param).
\end{align}
Using that $\loss$ is non-negative we conclude that $\lip(\net_{\param_\lambda})\leq \lip(\net_\param)$ and using Lemma~\ref{lem:lsc_lip} we get
\begin{align*}
\lip(\net_{\param^\dagger}) 
\ \leq \ 
\liminf_{\lambda\searrow 0} \lip(\net_{\param_\lambda}) \ \leq \ \lip(\net_\param).
\end{align*}
The lower semi-continuity of the loss $\loss$ and the continuity of the map $\param\mapsto\net_\param(\inp)$ for all $\inp\in\Inp$ imply
\begin{align*}
\loss(\net_{\param^\dagger}(\inp),\oup) \leq \liminf_{\lambda\searrow 0} \loss(\net_{\param_\lambda}(\inp),\oup),\quad\forall (x,y)\in\trSet.
\end{align*}
If we assume that $\loss(\net_{\param^\dagger}(\inp),\oup) > 0$ for some $(\inp,\oup)\in\trSet$, then we obtain 
\begin{align*}
    \liminf_{\lambda\searrow 0}
    \frac{1}{\lambda\abs{\trSet}}
    \sum_{(\inp,\oup)\in\trSet}\loss(\net_{\param_\lambda}(\inp),\oup) \geq \infty
\end{align*}
which is a contradiction to \eqref{ineq:bounded_energy} and implies that $\param^\dagger$ satisfies \eqref{eq:param_dagger}.
\end{proof}
\revision{Note that if the realizability assumption \eqref{eq:training_fit} is not satisfied, e.g., for noisy data or small network architectures, the previous statement has to be refined, which is subject to future work.}
The next proposition deals with the case in which the parameter $\lambda$ approaches infinity.
In this case the learned neural networks approach a constant map, coinciding with a generalized barycenter of the data. We denote by 
$\trSetY = \{y:(x,y)\in\trSet\}$ the set of the data in the output space.
\begin{proposition}\label{prop:lambda_to_inf}
Let \cref{ass:loss,ass:net,ass:exist} be satisfied and assume that 
\begin{align}\label{eq:manifold}
\mathcal{M}:=
\left\{\oup\in\Oup \st \exists\param\in\Param,\,
\net_\param(\inp)=y,\,\forall\inp\in\Inp\right\}\neq\emptyset.
\end{align}
Let $\param_\lambda$ denote a solution of \eqref{eq:lipschitz_training} for $\lambda>0$.
If $\param_\lambda \to \param_\infty \in\Param$ as $\lambda\to \infty$, then $\net_{\param_\infty}(\inp)=\hat{\oup}$ for all $\inp\in\Inp$ where 
\begin{align}\label{eq:barycenter}
\hat{\oup}\in\argmin\left\lbrace
\frac{1}{\abs{\trSet}}
\sum_{\oup\in\trSetY}\loss(\oup',\oup) 
\st \oup'\in\mathcal{M}\right\rbrace.
\end{align}
\end{proposition}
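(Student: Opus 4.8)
The plan is to mimic the structure of the proof of \cref{prop:lambda_to_zero}, but with a different family of comparison parameters: instead of networks that interpolate the data, I would use the \emph{constant} networks, i.e.\ those whose output value lies in the set $\mathcal{M}$ of \eqref{eq:manifold}. Such networks have Lipschitz constant zero, which is exactly what is needed to squeeze an a priori bound out of the optimality of $\param_\lambda$. The argument then has two parts: (i) show that the limit network $\net_{\param_\infty}$ is constant; (ii) identify its constant value with a minimizer in \eqref{eq:barycenter}.

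For part (i), I would fix some $\oup'\in\mathcal{M}$ (possible since $\mathcal{M}\neq\emptyset$) together with a parameter $\param'$ realizing $\net_{\param'}\equiv\oup'$, and set $C:=\frac{1}{\abs{\trSet}}\sum_{(\inp,\oup)\in\trSet}\loss(\oup',\oup)$; we may assume $C<\infty$ for at least one such choice, since otherwise the infimum in \eqref{eq:barycenter} is $+\infty$ and the statement about optimality is vacuous. Using $\lip(\net_{\param'})=0$, the optimality of $\param_\lambda$ in \eqref{eq:lipschitz_training}, and $\loss\geq 0$, one obtains
\begin{align*}
\lambda\,\lip(\net_{\param_\lambda})\ \leq\ \frac{1}{\abs{\trSet}}\sum_{(\inp,\oup)\in\trSet}\loss(\net_{\param_\lambda}(\inp),\oup)+\lambda\,\lip(\net_{\param_\lambda})\ \leq\ C,
\end{align*}
hence $\lip(\net_{\param_\lambda})\leq C/\lambda\to 0$ as $\lambda\to\infty$. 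By \cref{lem:lsc_lip} and $\param_\lambda\to\param_\infty$ this yields $\lip(\net_{\param_\infty})\leq\liminf_{\lambda\to\infty}\lip(\net_{\param_\lambda})=0$, and since the supremum in \eqref{eq:lipschitz_constant} runs over all $\inp\neq\inpp$, a vanishing Lipschitz constant forces $\net_{\param_\infty}$ to be constant; writing $\net_{\param_\infty}\equiv\hat{\oup}$, we have $\hat{\oup}\in\mathcal{M}$ by definition of $\mathcal{M}$.

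For part (ii), I would again invoke the optimality of $\param_\lambda$, this time simply dropping the non-negative term $\lambda\,\lip(\net_{\param_\lambda})$, to get $\frac{1}{\abs{\trSet}}\sum_{(\inp,\oup)\in\trSet}\loss(\net_{\param_\lambda}(\inp),\oup)\leq\frac{1}{\abs{\trSet}}\sum_{(\inp,\oup)\in\trSet}\loss(\oup',\oup)$ for every $\oup'\in\mathcal{M}$. Passing to the $\liminf$ as $\lambda\to\infty$ and using, exactly as in the proof of \cref{prop:lambda_to_zero}, the lower semi-continuity of $\oup\mapsto\loss(\oup,\oupp)$ and the continuity of $\param\mapsto\net_\param(\inp)$, one arrives at
\begin{align*}
\frac{1}{\abs{\trSet}}\sum_{(\inp,\oup)\in\trSet}\loss(\hat{\oup},\oup)\ \leq\ \liminf_{\lambda\to\infty}\frac{1}{\abs{\trSet}}\sum_{(\inp,\oup)\in\trSet}\loss(\net_{\param_\lambda}(\inp),\oup)\ \leq\ \frac{1}{\abs{\trSet}}\sum_{(\inp,\oup)\in\trSet}\loss(\oup',\oup),
\end{align*}
where the leftmost quantity uses $\net_{\param_\infty}(\inp)=\hat{\oup}$ for all $\inp\in\Inp$. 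Since $\oup'\in\mathcal{M}$ was arbitrary and $\hat{\oup}\in\mathcal{M}$, this is precisely the statement that $\hat{\oup}$ solves \eqref{eq:barycenter}.

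I do not expect a serious obstacle here: it is a routine variational stability argument, and all the lower semi-continuity ingredients are already packaged in \cref{lem:lsc_lip} and the proof of \cref{prop:lambda_to_zero}. The one place where the hypotheses are genuinely used is the a priori estimate $\lip(\net_{\param_\lambda})=\mathcal{O}(1/\lambda)$: this requires a comparison network of \emph{zero} Lipschitz constant and hence the assumption $\mathcal{M}\neq\emptyset$, whereas \cref{ass:exist} alone would only furnish a competitor of finite Lipschitz constant and would not be enough to force $\net_{\param_\infty}$ to be constant. Minor bookkeeping points worth stating carefully are the degenerate case in which every element of $\mathcal{M}$ has infinite empirical risk (where \eqref{eq:barycenter} is vacuous) and the convention under which $\sum_{\oup\in\trSetY}$ in \eqref{eq:barycenter} is read with multiplicities so that it matches $\sum_{(\inp,\oup)\in\trSet}$ together with the $1/\abs{\trSet}$ normalization.
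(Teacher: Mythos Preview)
Your proposal is correct and follows essentially the same route as the paper: compare $\param_\lambda$ against constant networks from $\mathcal{M}$ to force $\lip(\net_{\param_\lambda})\to 0$, use \cref{lem:lsc_lip} to conclude $\net_{\param_\infty}$ is constant, and then pass to the $\liminf$ in the empirical risk via the lower semi-continuity of $\loss$ and continuity of $\param\mapsto\net_\param(\inp)$. Your added remarks on the degenerate case and the multiplicity convention for $\trSetY$ are useful clarifications but do not change the argument.
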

\begin{proof}
From the optimality of $\param_\lambda$ we deduce
\begin{align*}
\sum_{(\inp,\oup)\in\trSet}\loss(\net_{\param_\lambda}(\inp),\oup) + \lambda\lip(\net_{\param_\lambda}) 
\ \leq \ 
\sum_{\oup\in\trSetY}\loss(\oup',\oup),
\quad\forall\oup'\in\mathcal{M}.
\end{align*}
Hence, by letting $\lambda\to\infty$ we obtain $\lim_{\lambda\to\infty}\lip(\net_{\param_\lambda})=0$.
If we now assume that $\param_\lambda\to\param_\infty$ as $\lambda\to\infty$, we can use the lower semi-continuity of the Lipschitz constant to obtain
\begin{align*}
    \lip(\net_{\param_\infty}) \leq \liminf_{\lambda\to\infty}\lip(\net_{\param_\lambda}) = 0.
\end{align*}
Hence, $\net_{\param_\infty}\equiv \hat{\oup}$ for some element $\hat{\oup}\in\Oup$. 
Using the lower semi-continuity of the loss function, we can conclude the proof with
\begin{align*}
\sum_{\oup\in\trSetY}\loss(\hat{\oup},\oup)
\ &=  
\! \sum_{(\inp,\oup)\in\trSet}
\loss(\net_{\param_\infty}(\inp),\oup) 
\ \leq \
\liminf_{\lambda\to\infty}
\sum_{(\inp,\oup)\in\trSet}
\loss(\net_{\param_\lambda}(\inp),\oup) \\
&\leq \
\sum_{\oup\in\trSetY} \loss(\oupp,\oup),\quad\forall \oupp\in\mathcal{M}.
\end{align*}
\end{proof}
\begin{remark}
For a Euclidean loss function, i.e., $\loss(\oupp,\oup)=\norm{\oupp-\oup}^2$, the quantity \eqref{eq:barycenter} coincides with a projection of the barycenter of the training samples in $\trSetY$ onto the manifold of constant networks $\mathcal{M}$, given by \eqref{eq:manifold}.
This can be seen as follows: 
Let $b:=\frac{1}{\abs{\trSet}}\sum_{\oup\in\trSetY}\oup$ 
be the barycenter of the training samples.
Then
\begin{align*}
\frac{1}{\abs{\trSet}}\sum_{\oup\in\trSetY}\norm{\oupp-\oup}^2
&= 
\norm{\oupp}^2-2~\langle\oupp,b\rangle+
\frac{1}{\abs{\trSet}}\sum_{\oup\in\trSetY}\norm{\oup}^2\\
&=
\norm{\oupp-b}^2-\norm{b}^2+
\frac{1}{\abs{\trSet}}\sum_{\oup\in\trSetY}\norm{\oup}^2,
\qquad \forall \oupp\in\Oup.
\end{align*}
Using this equality for general $\oupp\in\mathcal{M}$ and for 
$\oupp=\hat{\oup}$ given by \eqref{eq:barycenter} we obtain
\begin{align*}
0 \ \leq \ \frac{1}{\abs{\trSet}}
\sum_{\oup\in\trSetY}\norm{\oupp-\oup}^2 
-\frac{1}{\abs{\trSet}}\sum_{\oup\in\trSetY}
\norm{\hat{\oup}-\oup}^2
\ = \ \norm{\oupp-b}^2 - \norm{\hat{\oup}-b}^2.
\end{align*}
This in particular implies that $\norm{\hat{\oup}-b}\leq\norm{{\oupp}-b}$ for all $\oupp\in\mathcal{M}$ as desired.
\end{remark}
\section{Experiments}

After having studied theoretical properties of the proposed variational Lipschitz regularization method \eqref{eq:lipschitz_training}, we will apply the CLIP algorithm to regression and classification tasks in this section. 
In a first experiment we train a neural network to approximate a nonlinear function given by noisy samples, where we illustrate that CLIP prevents strong oscillations, which appear in unregularized networks.
In a second experiment we use CLIP for training two different classification networks on the MNIST and Fashion-MNIST databases. 
Here, we quantitatively show that CLIP yields superior robustness to noise and adversarial attacks compared to unregularized and weight-regularized neural networks.
\revision{In all experiments we used Euclidean norms for the Lipschitz constant~\eqref{eq:lipschitz_constant}.}

Our implementation of CLIP is available on \texttt{github}.\footnote{\url{https://github.com/TimRoith/CLIP}}

\subsection{Nonlinear Regression}

In this experiment we train a neural network to approximate the nonlinear function $f(x)=\frac12\max(|x|-3,0)$ with three hidden layers consisting of 
500, 200 and 100 neurons using the \emph{sigmoid} activation function.
We construct 100 noisy training pairs by sampling values of $f$ on the set $[-4,-3]\cup[-0.3,0.3]\cup[3,4]$. 
The set of Lipschitz training samples is randomly chosen in every epoch and consists 
of tuples $(\inp,\inpp)$ where $\inp\in[-4,4]$ and $\inpp$ is a noise perturbation of $\inp$. 
\cref{fig:1D_regression} shows the learned networks after applying the CLIP Algorithm~\ref{alg:lip_reg_basic} with different values of the regularization parameter $\lambda$.
Note that for this experiment we do not use a target accuracy but choose a fixed value of $\lambda$.
For $\lambda=10$ the learned network is very smooth, which yields a poor approximation of the ground truth especially at the boundary of the domain.
For decreasing values of $\lambda$ the networks approach the ground truth solution steadily, showing no instabilities.
In contrast, the unregularized network with $\lambda=0$ shows strong oscillations in regions of missing data, which implies a poor generalization capability. 
\revision{Both for $\lambda=10^{-10}$ and $\lambda=0$ the trained networks do not fit the noisy data.
A reason for this is that some of the data points are negative and we use non-negative sigmoid activation in the last layer. Also, using stochastic gradient descent implicitly regularizes the problem and avoids convergence to spurious critical points of the loss.}

Note that we warmstart the computation for $\lambda=10$ with the pretrained unregularized network. 
The computations for $\lambda=1$ and $\lambda=10^{-10}$ are warmstarted with the respective larger value of $\lambda$.
The latter successive reduction of the regularization parameter is necessary in order to ``select'' the desired Lipschitz-minimal solution as $\lambda\to 0$, which resembles Bregman iterations~\cite{osher2005iterative}.
\begin{figure}[tbh]
    \def\PicWidth{0.24\textwidth}
    \centering
    \begin{subfigure}{\PicWidth}
    \centering
    \includegraphics[width=\textwidth,trim=0cm 0cm 0.1cm 0.1cm, clip]{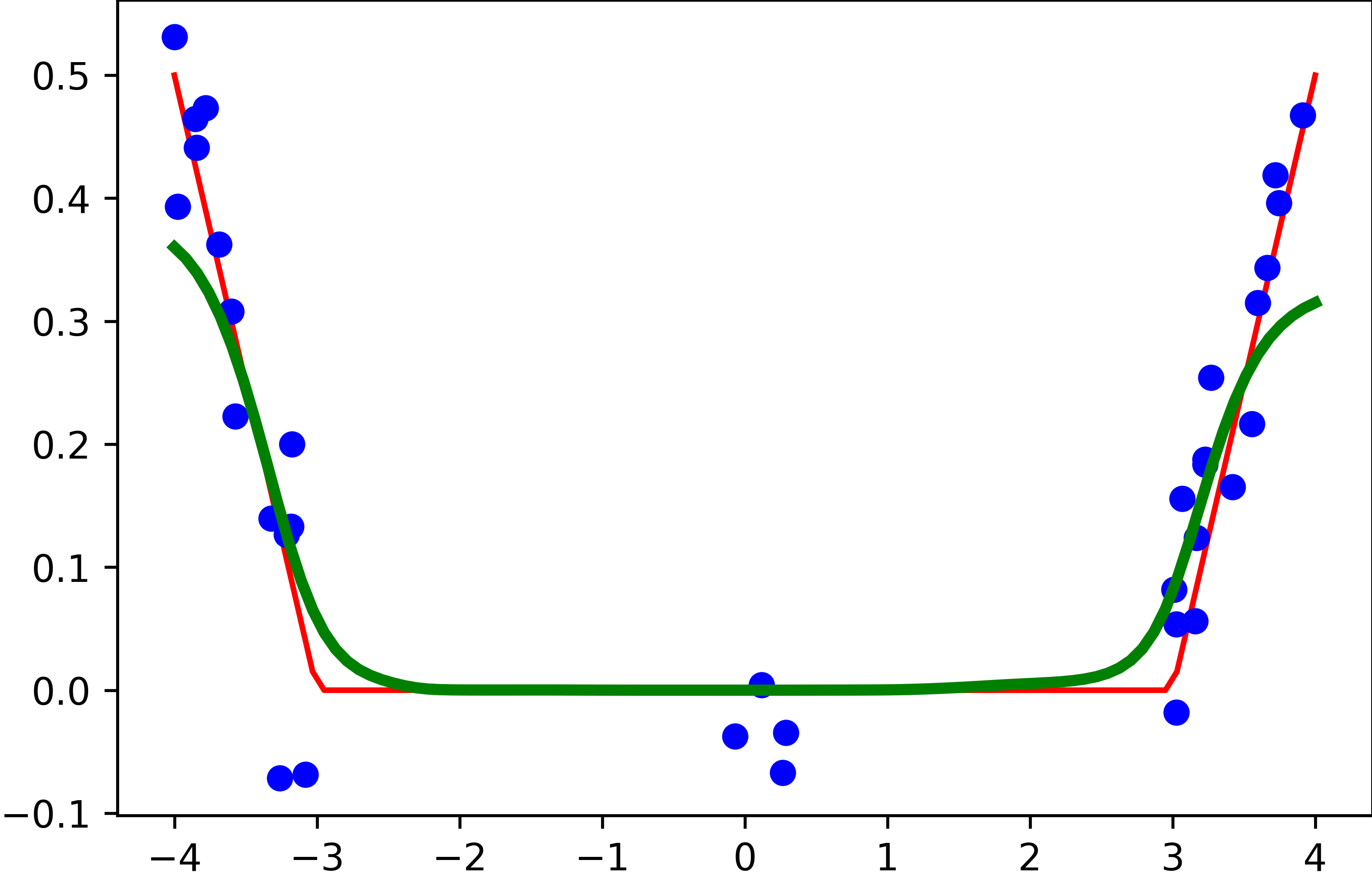}
    \caption{$\lambda=10$}
    \end{subfigure}%
    \hfill%
    \begin{subfigure}{\PicWidth}
    \centering
    \includegraphics[width=\textwidth,trim=0cm 0cm 0.1cm 0.1cm, clip]{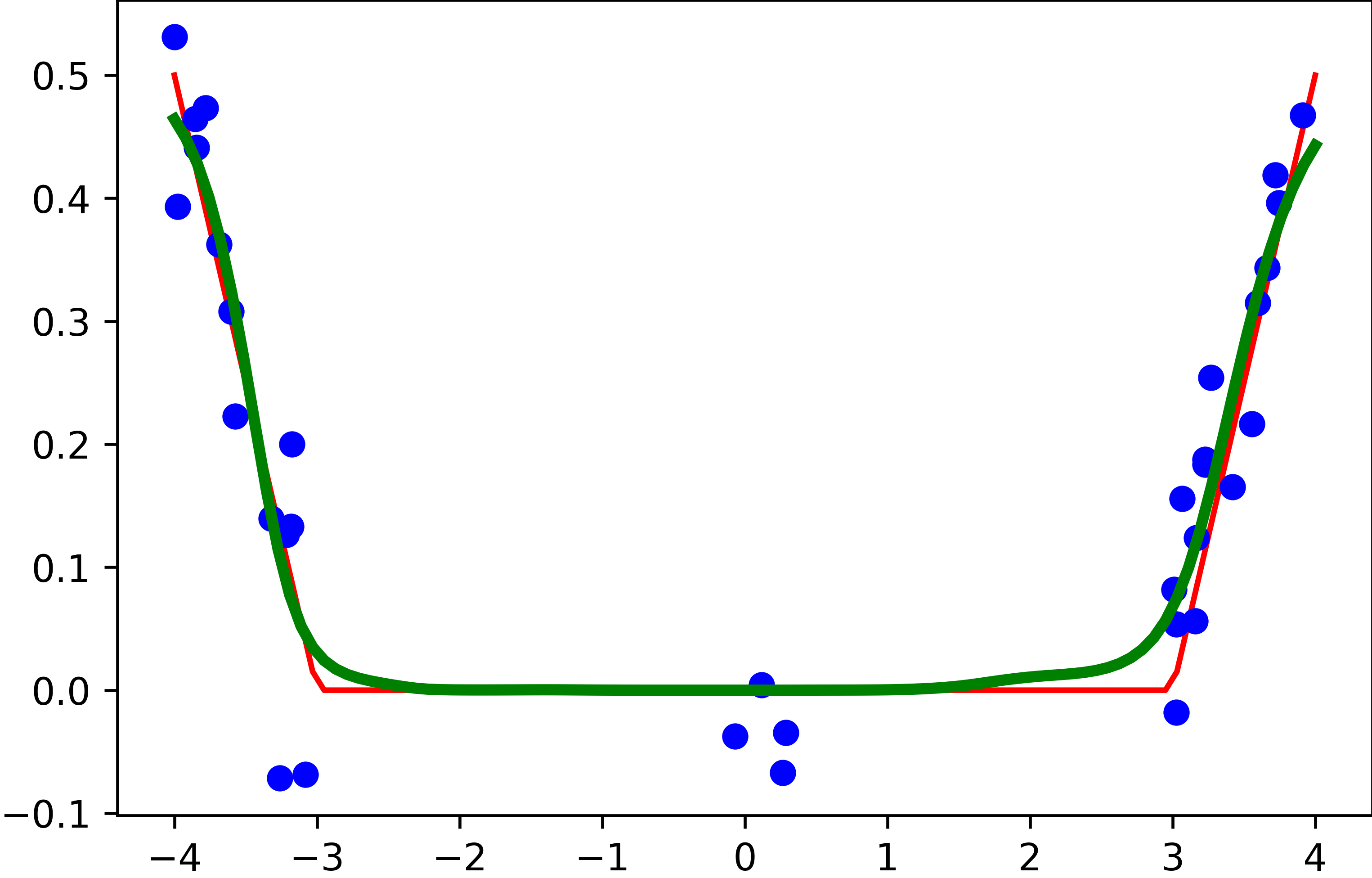}
    \caption{$\lambda=1$}
    \end{subfigure}%
    \hfill%
    \begin{subfigure}{\PicWidth}
    \centering
    \includegraphics[width=\textwidth,trim=0cm 0cm 0.1cm 0.1cm, clip]{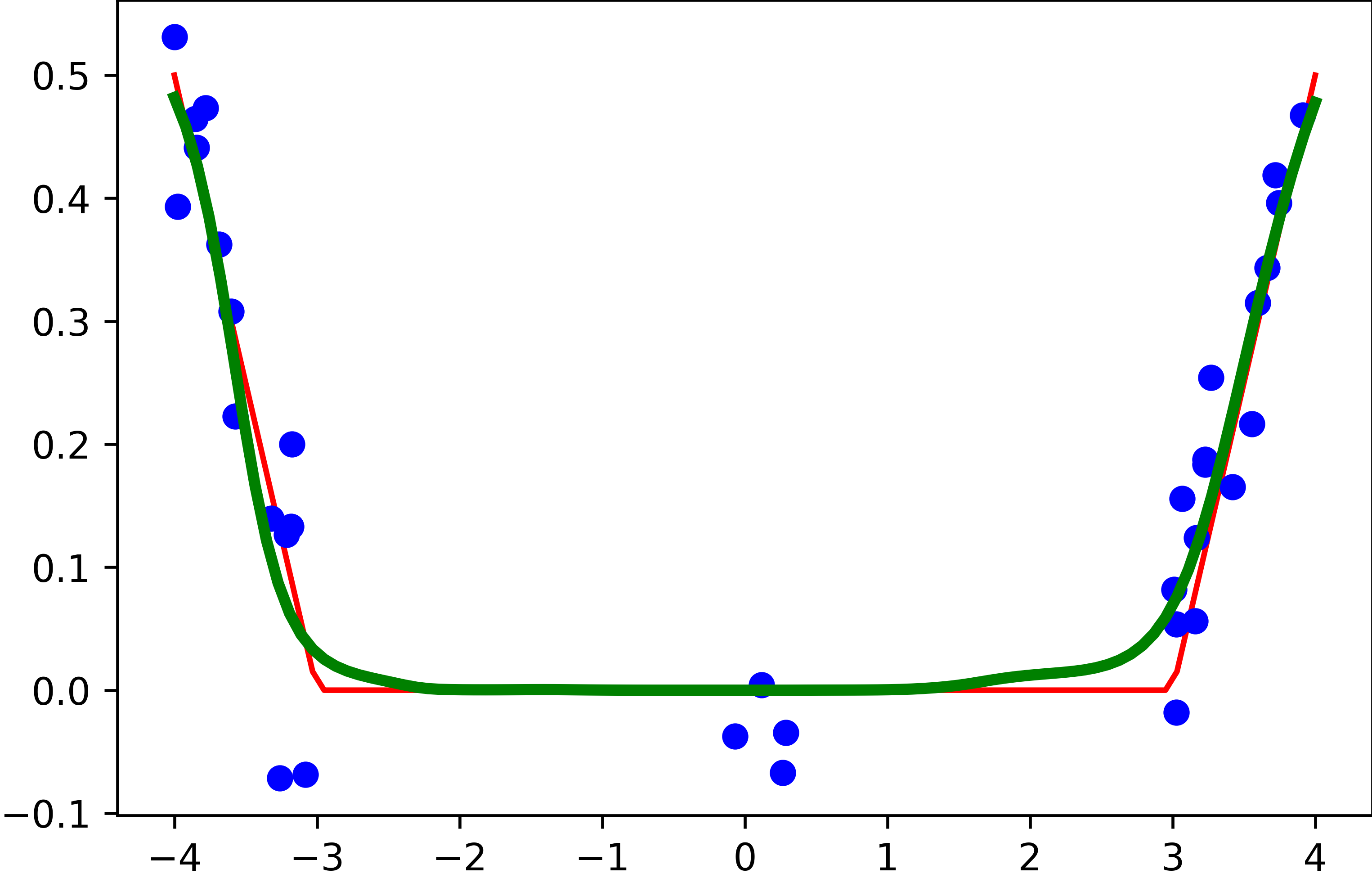}
    \caption{$\lambda=10^{-10}$}
    \end{subfigure}%
    \hfill%
    \begin{subfigure}{\PicWidth}
    \centering
    \includegraphics[width=\textwidth,trim=0cm 0cm 0.1cm 0.1cm, clip]{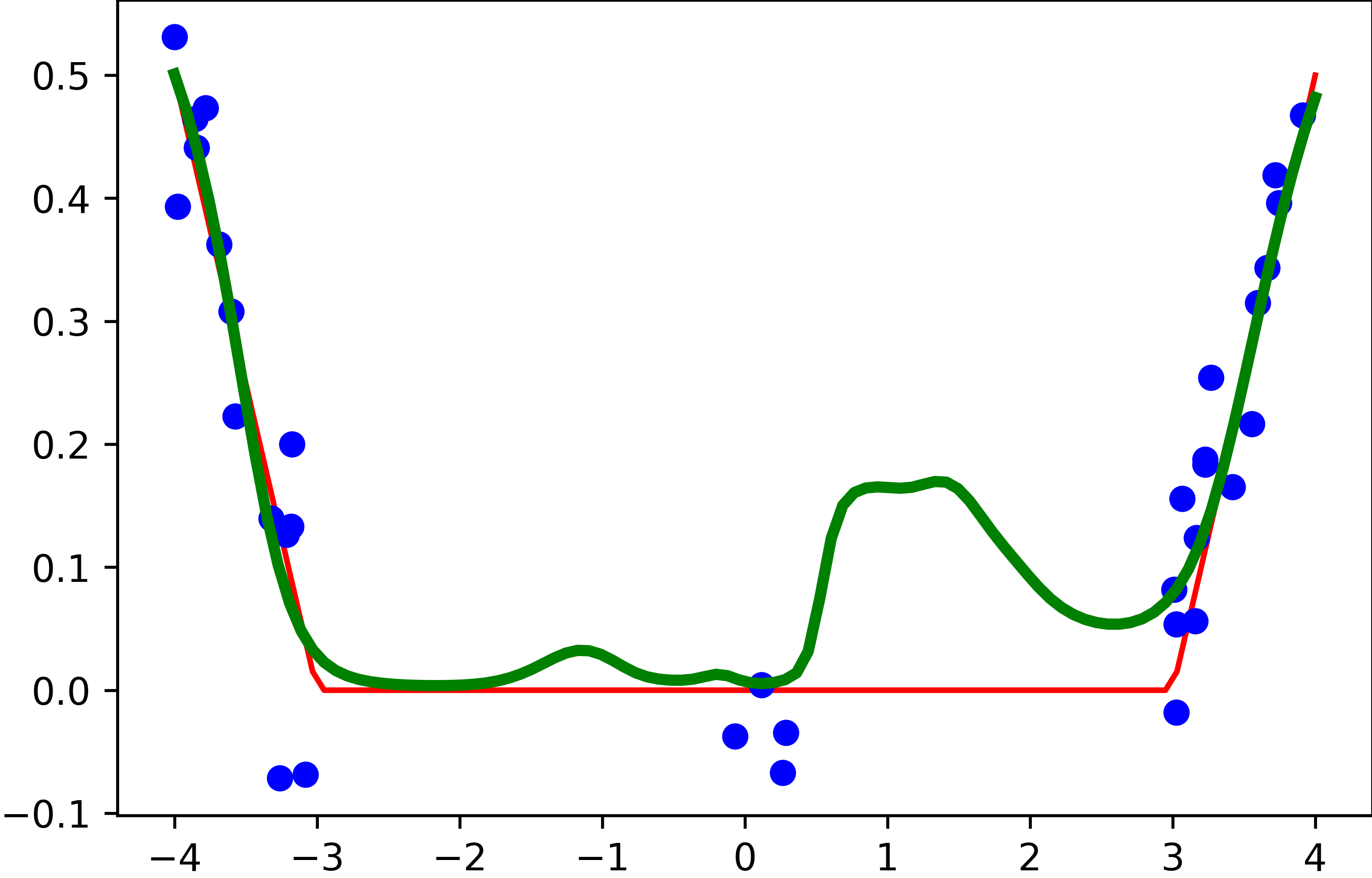}
    \caption{$\lambda=0$}
    \end{subfigure}%
    \caption{Lipschitz regularized networks for different values of $\lambda$. \textbf{Red:} ground truth. \textbf{Blue:} noisy ground truth samples. \textbf{Green:} trained neural network.}
    \label{fig:1D_regression}
\end{figure}
\begin{table}[h!]
\setlength{\tabcolsep}{1.1em}
    \centering
    \resizebox{0.92\textwidth}{!}{%
    \begin{tabular}{c|lcccc|r}
    \toprule
    &Training Type & Train & Test & Noise & PGD & $\mu$, $\lambda$  \\
    \midrule
    \multirow{7}{*}{\rotatebox{90}{\centering\textbf{MNIST}}}
    &Standard & \textbf{95.6} & 89.8 & 71.8 & 25.3 & 0 \\
    &Weight Reg.\textsubscript{95} & 93.1 & 89.0 & 71.4 & 25.4 & 0.0003 \\
    &Weight Reg.\textsubscript{90} & 89.8 & 89.7 & 71.9 & 24.8 & 0.0015 \\
    &Weight Reg.\textsubscript{85} & 84.8 & 87.2 & 72.3 & 24.9 & 0.0031 \\
    &{\bf CLIP\textsubscript{95}} & 95.3 & 90.7 & 70.3 & 24.4 & 0.01 \\ 
    &{\bf CLIP\textsubscript{90}} & 91.8 & \textbf{91.6} & \textbf{78.0} & 36.2 & 3.91 \\ 
    &{\bf CLIP\textsubscript{85}} & 87.6 & 87.2 & 74.5 & \textbf{37.7} & 9.82 \\ 
    \midrule
    \multirow{7}{*}{\rotatebox{90}{\textbf{Fashion-MNIST}}}
    &Standard & \textbf{98.5} & \textbf{92.0} & 25.7 & 1.9 & 0 \\
    &Weight Reg.\textsubscript{95} & 94.8 & 91.2 & 26.9 & 3.5 & 0.0011 \\
    &Weight Reg.\textsubscript{90} & 89.8 & 89.6 & 28.1 & 6.0 & 0.0023 \\
    &Weight Reg.\textsubscript{85} & 85.3 & 85.4 & \textbf{34.2} & 10.2 & 0.0091 \\
    &{\bf CLIP\textsubscript{95}} & 94.6 & \textbf{91.9} & 19.8 & 9.1 & 0.18 \\
    &{\bf CLIP\textsubscript{90}} & 90.6 & 88.9 & 23.3 & 13.0 & 0.74 \\
    &{\bf CLIP\textsubscript{85}} & 85.9 & 83.1 & 31.9 & \textbf{14.9} & 4.18 \\
    \bottomrule
    \end{tabular}
    }
    \caption{Accuracies [\%] for different training setups, tested on train, test, noisy, and adversarial \textbf{(Fashion-)MNIST} data. The target accuracies are given by subscript. The regularization parameters $\mu$ and $\lambda$ after training are also shown.}
    \label{tab:classification_results}%
    \vspace*{-.7cm}
\end{table}

\subsection{Classification on MNIST and Fashion-MNIST}
\label{sec:classification}
We evaluate the robustness of neural networks trained with the proposed variational CLIP algorithm on the popular MNIST \cite{LeCun98} and Fashion-MNIST \cite{Xiao2017} databases, which contain $28\times28$ grayscale images of handwritten digits and fashion articles, respectively.
They are split into $60,000$ samples for training and $10,000$ samples for evaluation. 
For MNIST we train a neural network with two hidden layers with sigmoid activation function containing 64 neurons each.
For Fashion-MNIST we use a simple convolutional neural network from \cite{Madry2018}.
%
We train non-regularized networks, CLIP regularized networks, and networks where we perform a $L_2$ regularization of the weights. 
The latter corresponds to~\eqref{eq:lipschitz_training_reg} with $\lambda=0$ and $\mu>0$, where we update $\mu$ with the same discrepancy principle as for CLIP. 
Note that the weight regularization implies a layerwise Lipschitz bound of the type \eqref{ineq:lip_weights}.
The quantity $\mathcal{A}(\net_\param;\trSet)$ in \cref{alg:lip_reg_basic} is chosen as the percentage of correctly classified training samples.
We set the target accuracies for $L_2$ regularized and CLIP regularized 
networks to $85\%$, $90\%$, and $95\%$.
The Lipschitz training set consists of 6,000 images removed from the training set and 
their noisy versions. 
We compare the robustness of all trained networks to adversarial perturbations $x+\delta$ of a sample $\inp\in\Inp$ in the $L_2$ norm, such that $\|\delta\| \leq \varepsilon$ with $\varepsilon=2$, calculated with the Projected Gradient Descent (PGD) attack \cite{Madry2018} as
\begin{align} \label{eq:pgd}
\inp_\mathrm{adv}^{t + 1} = \Pi\left(\inp_\mathrm{adv}^{t} + \alpha \operatorname{sign}(\nabla_{\inp}  \loss(\net_\param(\inp_\mathrm{adv}^{t}), \oup)\right),
\end{align}
where $\alpha>0$ is the step size and $\inp_\mathrm{adv}^{t}$ describes the adversarial example at iteration $t$. 
Here, $\Pi:\Inp\to\Inp$ is a projection operator onto the $L_2$-ball with radius $\varepsilon$, and $\operatorname{sign}$ is the componentwise signum operator. 
We set the step size $\alpha=0.25$ and the total number of iterations to $100$. 
Furthermore, we use Gaussian noise with zero mean and unit variance for an additional robustness evaluation. We track the performance against the PGD attack of each model during training and use the model checkpoint with the highest robustness for testing.

Table \ref{tab:classification_results} shows the networks' performance for the MNIST and Fashion-MNIST databases. The CLIP algorithm increases robustness with respect to Gaussian noise and PGD attacks depending on the target accuracy. 
One observes an inversely proportional relationship between the target accuracy and the magnitude of the regularization parameter $\lambda$ as expected from Proposition~\ref{prop:in_decrease_lambda}.
The results for MNIST indicate a
higher robustness of the CLIP regularized neural networks with respect to noise perturbations and adversarial attacks in comparison to unregularized or weight-regularized neural networks with similar accuracy on unperturbed test sets.
Similar observations can be made for the Fashion-MNIST database.
While our method is also more robust under adversarial attacks, it is slightly more prone to noise than the weight-regularized neural networks.
\section{Conclusion and Outlook}
We have proposed a variational regularization method to limit the Lipschitz constant of a neural network during training and derived \emph{CLIP}, a stochastic gradient descent-ascent training method. 
Furthermore, we have studied the dependency of the trained neural networks on the regularization parameter and investigated the limiting behavior as the parameter tends to zero and infinity.
Here, we have proved convergence to Lipschitz-minimal data fits or constant networks, respectively.
We have evaluated the CLIP algorithm on regression and classification tasks and showed that our method effectively increases the stability of the learned neural networks compared to weight regularization methods and unregularized neural networks.
\revision{In future work we will analyze convergence and theoretical guarantees of CLIP using techniques from stochastic analysis.}
%
%
%
%
%
%

\bibliographystyle{splncs04}
\bibliography{bibliography}
%

\end{document}